\newsavebox{\tablebox}
\newtheorem{defn}{Definition}
\newtheorem{proof}{Proof}
\newcommand{\sig}[1]{{\small\textsf{{#1}}}}
\newcommand{\qed}{\hfill $\square$}
\newcommand{\Comment}[1]{}
\newtheorem{lemma}{Lemma}
\newcommand\copyrighttext{%
  \footnotesize \textcopyright 2021 IEEE. Personal use of this material is permitted.
  Permission from IEEE must be obtained for all other uses, in any current or future
  media, including reprinting/republishing this material for advertising or promotional
  purposes, creating new collective works, for resale or redistribution to servers or
  lists, or reuse of any copyrighted component of this work in other works.
  DOI: 10.1109/AITEST52744.2021.00020}
\newcommand\copyrightnotice{%
\begin{tikzpicture}[remember picture,overlay]
\node[anchor=south,yshift=10pt] at (current page.south) {\fbox{\parbox{\dimexpr\textwidth-\fboxsep-\fboxrule\relax}{\copyrighttext}}};
\end{tikzpicture}%
}
\begin{document}
%
% paper title
% Titles are generally capitalized except for words such as a, an, and, as,
% at, but, by, for, in, nor, of, on, or, the, to and up, which are usually
% not capitalized unless they are the first or last word of the title.
% Linebreaks \\ can be used within to get better formatting as desired.
% Do not put math or special symbols in the title.
\title{Testing Autonomous Systems with Believed Equivalence Refinement}

% author names and affiliations
% use a multiple column layout for up to three different
% affiliations
\author{\IEEEauthorblockN{Chih-Hong Cheng}
\IEEEauthorblockA{DENSO AUTOMOTIVE Deutschland GmbH\\
Eching, Germany\\
Email: c.cheng@eu.denso.com}
\and
\IEEEauthorblockN{Rongjie Yan}
\IEEEauthorblockA{State Key Laboratory of Computer Science, ISCAS\\
Beijing, China\\
Email: yrj@ios.ac.cn}}

% make the title area
\maketitle
\copyrightnotice

% As a general rule, do not put math, special symbols or citations
% in the abstract
\begin{abstract}
Continuous engineering of autonomous driving functions commonly requires deploying vehicles in road testing to obtain inputs that cause problematic decisions. Although the discovery leads to producing an improved system, it also challenges the foundation of testing using equivalence classes and the associated relative test coverage criterion. In this paper, we propose \emph{believed equivalence}, where the establishment of an equivalence class is initially based on expert belief and is subject to a set of available test cases having a consistent valuation. Upon a newly encountered test case that breaks the consistency, one may need to \emph{refine} the established categorization in order to split the originally believed equivalence into two. 
Finally, we focus on  modules implemented using deep neural networks where every category partitions an input over the real domain. We present both analytical and lazy methods to suggest the refinement. The concept is demonstrated in analyzing multiple autonomous driving modules, indicating the potential of our proposed approach. 
\end{abstract}

% no keywords

% For peer review papers, you can put extra information on the cover
% page as needed:
% \ifCLASSOPTIONpeerreview
% \begin{center} \bfseries EDICS Category: 3-BBND \end{center}
% \fi
%
% For peerreview papers, this IEEEtran command inserts a page break and
% creates the second title. It will be ignored for other modes.
\IEEEpeerreviewmaketitle

\section{Introduction}

Mainstream approaches for engineering autonomous vehicles follow a paradigm of  \emph{continuous improvement}. It is widely conceived in industry that unconsidered engineering issues such as edge cases may appear when validating the  autonomous vehicles on road with sufficiently many miles. The autonomy function, upon a newly encountered edge case, will be evaluated against the impact and function modification may be performed. While discovering the previously unconsidered issues  leads to an improvement of the engineered system, it also challenges the previously stated assumptions being established in design and in test time.

In this paper, we formally address the impact of newly discovered issues over the previously stated \emph{test-completeness claims}  based on \emph{equivalence classes and test coverage}. Intuitively, equivalence classes are situations where the system demonstrates similar behavior, and test coverage is a \emph{relative completeness} measure to argue that the testing efforts have covered the entire input space with sufficient diversity.  
While the concept of equivalence class may be straightforward in theory, to apply it in complex autonomous driving settings, we mathematically formulate a revision called \emph{believed equivalence class}. The initial categorization used to build equivalence classes is initially based on expert judgements (e.g., all images having sunny weather and straight roads form a believed equivalence class for front-vehicle detection, as experts think that a front vehicle will be detected) and is subject to available test cases
\emph{demonstrating similar behavior}. 
So long as any  newly discovered test cases in the same class violate the behavior similarity (e.g., on sunny weather but the straight road is covered with snow, the perception will fail to detect a white vehicle in the front), there is a need to \emph{refine} the believed equivalence class.  
Regarding the impact of refinement over test coverage, while the number of equivalence classes is known to be exponential (aka \emph{combinatorial explosion}), under the restricted setting of achieving full coverage using $\gamma$-way combinatorial testing~\cite{nie2011survey}, the maximum number of additional test cases required to return to full coverage (due to refinement) is bounded by a polynomial with $\gamma-1$ degree.

Finally, we look into the testing of functions implemented using \emph{deep neural networks (DNNs)}. When the equivalence class is based on partitioning the input domain over real numbers, guiding the construction of refined equivalence classes can utilize the known approach of \emph{adversarial perturbation}~\cite{szegedy2013intriguing,moosavi2017universal}. This avoids creating a believed equivalence class that only contains a single point. For deep neural networks, we prove that finding an optimal cut can be encoded using mixed-integer linear programming (MILP). Due to the scalability considerations of purely analytical methods, we also utilize the concept of $k$-nearest neighbors among the existing test data to guide the perturbed direction, in order to find feasible cuts that at least consistently separate points along all perturbed directions.

To demonstrate the validity of the approach, we have applied the technique on DNN-based modules for trajectory following and adaptive cruise control in autonomous driving. Our  research prototype continuously refines the believed equivalence upon newly encountered test cases that lead to inconsistency.  Altogether our proposal (equivalence classes based on currently available test cases) provides a rigorous framework to understand better the concept of continuous testing and its impact on the completeness of verification of autonomous systems using test coverage. 

The rest of the paper is structured as follows. After  Section~\ref{sec.related.work} comparing our results with existing work, Section~\ref{sec.believed.equivalence} formulates the concept  of believed equivalence, coverage, and the refinement of  believed equivalence. Section~\ref{sec.application.in.dnn} discusses how believed equivalence can be applied when the module under test is a deep neural network. Finally, we summarize our evaluation in Section~\ref{sec.evaluation} and conclude in Section~\ref{sec.concluding.remarks}.

\begin{figure}[t]
\centering
\includegraphics[width=\columnwidth]{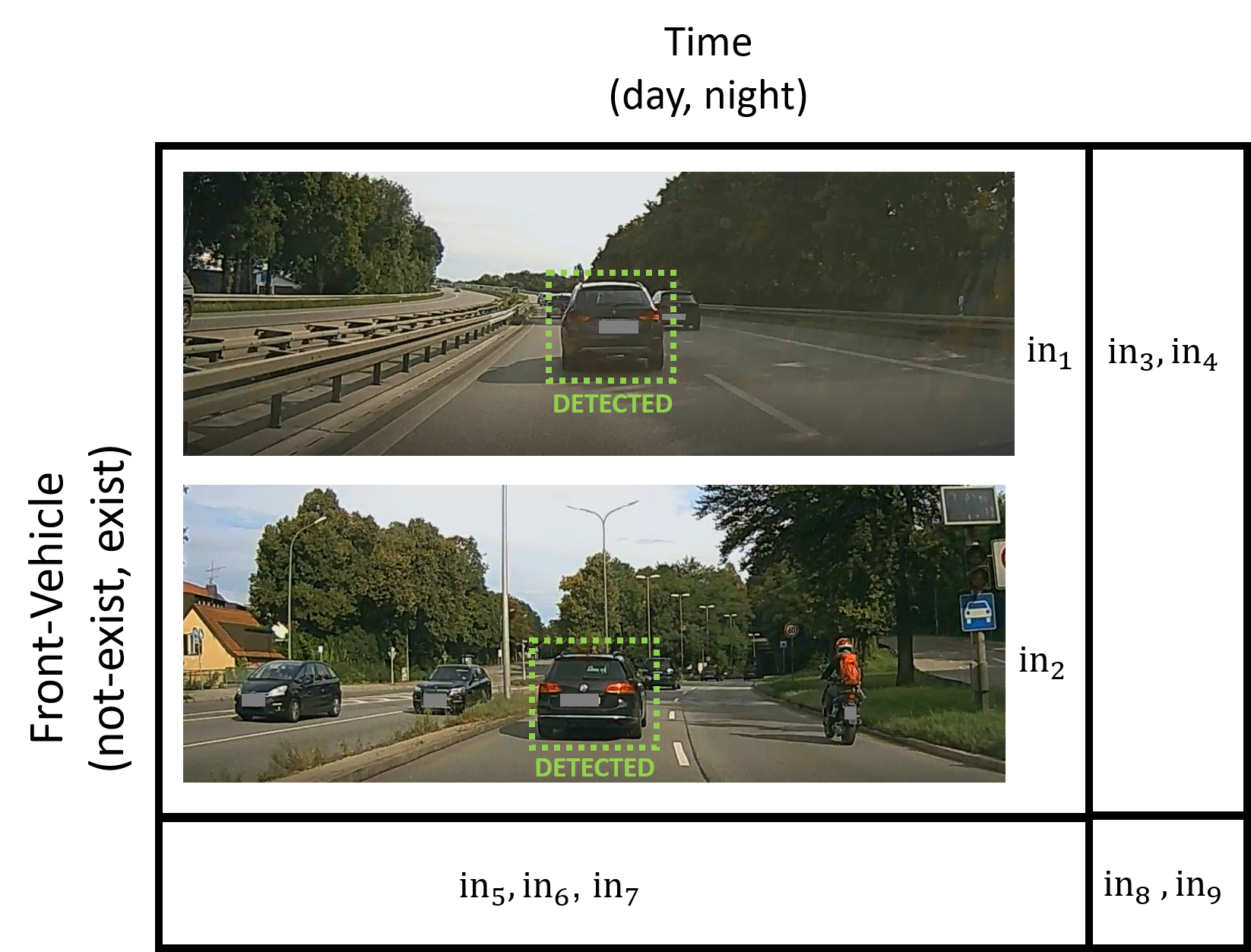}
\caption{Illustrating believed equivalence}
\label{fig.example.categorization}
\end{figure}

\section{Related Work}\label{sec.related.work}

The concept of equivalence class has been widely used in analyzing complex systems. For example, region construction  used in timed automata~\cite{alur1999timed} establishes equivalence classes over infinitely many states regarding the ordering of clock variables reaching integer values. This (finite number of regions) serves as a basis of decidability results for verifying properties of timed automata. State-of-the-art guidelines for safety engineering autonomous driving systems such as ISO TR~4804~\cite{saFAD}\footnote{An early version (aka SaFAD) drafted by multiple automotive manufacturers and suppliers is publicly available at~\url{https://www.daimler.com/documents/innovation/other/safety-first-for-automated-driving.pdf}} admit the need for continuous engineering; ISO TR 4804 also explicitly mentions equivalence class as a method to avoid excessive testing efforts. Nevertheless, it is not mathematically defined and  lacks formal guarantees compared to the classical equivalence class concept. Finally,  there are proposals such as micro-ODD~\cite{koopman2019autonomous} in the safety engineering domain to fine-grain the allowed driving conditions. Still, these proposals  lack a formal treatment in terms of an equivalence class. All the above results provide a strong motivation to our work: Our formulated believed equivalence class only demonstrates equivalent behavior on the currently available test cases and may be subject to refinement in a continuous engineering paradigm. 

For testing DNN-based autonomous systems,  researchers proposed structural coverage metrics to achieve a certain level of ``relative completeness"~\cite{pei2017deepxplore,ma2018deepgauge,sun2018concolic}. Still, recent results have indicated their limited applicability~\cite{li2019structural}. Utilizing scenario description languages~\cite{fremont2019scenic,Msdl}, one can naturally partition the input space for testing~\cite{cheng2018quantitative,LI2020106200}, but there is no guarantee that all inputs within the partition demonstrate similar behavior. Therefore, one may only process a weaker guarantee (equivalence by currently available test cases), as reflected by our proposal.

Finally, for deep neural networks, our construction of refining the believed equivalence integrates the idea of local robustness (see~\cite{huang2017safety,seshia2018formal,dreossi2019formalization} for recent surveys) as used in deep learning. Due to the scalability issues related to exact analytical methods such as MILP, it is realistic to generate a believed equivalence class that contains local robustness region within a reasonable time budget; one can perform refinement upon newly encountered test cases. Our adoption of $k$-nearest neighbors serves for this purpose.

\section{Believed Equivalence  and Refinement}\label{sec.believed.equivalence}

\subsection{Believed Equivalence}\label{subsec.believed.equivalence}

Let $f: \mathcal{D}_{in} \rightarrow \mathcal{D}_{out}$ be the function under analysis, where $\mathcal{D}_{in}$ and $\mathcal{D}_{out}$ are the input and the output domain respectively. Let $evl: \mathcal{D}_{in} \times \mathcal{D}_{out} \rightarrow \mathcal{D}_{evl}$ be an \emph{evaluation function} that assigns an input-output pair with a value in the evaluation domain $\mathcal{D}_{evl}$. 
Let $\mathcal{C} = \{C_1, \ldots, C_m\}$ be the set of \emph{categories}, where for $i \in \{1, \ldots, m\}$, each category $C_i = \{c_{i,1}, \ldots, c_{i,|C_i|}\}$ has a set of \emph{elements} with the set size being~$|C_i|$. In our example, we may use $C_i.c_{i,j}$ to clearly indicate the element and its belonging category. Each element $c_{i,j}: \mathcal{D}_{in} \rightarrow \{\sig{false}, \sig{true}\}$ evaluates any input in the domain to $\sig{true}$ or $\sig{false}$. We assume that for any input $\sig{in} \in \mathcal{D}_{in}$, for every~$i$ there exists exactly one~$j$ ($\exists! j \in \{1, \ldots, |C_i|\}$) such that $c_{i,j}(\sig{in}) = \sig{true}$, i.e., for each category, every input is associated to exactly one element. 

In the following, we define believed equivalence over the test set. Intuitively, for any two data points in the test set, so long as their corresponding element information are the same, demonstrating believed equivalence requires that the results of the evaluation function are also the same. 

\vspace{1mm}

\begin{defn}\label{def.believed.equiv}
Given a set $\mathcal{D}_{test} \subset D_{in}$ of test cases and categories $\mathcal{C}$, function~$f$ \textbf{demonstrates believed equivalence} over~$\mathcal{C}$ under the set $\mathcal{D}_{test}$ and evaluation function~$evl$,  denoted as $f \models^{evl}_{\mathcal{D}_{test}} \mathcal{C}$, if for any two test cases $\sig{in}_1, \sig{in}_2 \in \mathcal{D}_{test}$ the following condition holds:

\begin{equation*}%\small
\resizebox{.9\hsize}{!}{$
\begin{split}
  (\forall i \in \{1, \ldots, m\}, j \in \{1, \ldots, |C_i|\}:  c_{i,j}(\sig{in}_1) = c_{i,j}(\sig{in}_2))  \\
  \rightarrow evl(\sig{in}_1, f(\sig{in}_1)) = evl(\sig{in}_2, f(\sig{in}_2))  
\end{split} $}
\end{equation*}
\end{defn}

Consider the example illustrated in Figure~\ref{fig.example.categorization}. Given $\mathcal{D}_{in}$ to be all possible pixel values in an image from the vehicle's mounted camera, we build a highly simplified input categorization in the autonomous driving perception setup. Let $\mathcal{C} = \{\sig{Time}, \sig{Front-Vehicle}\}$, where  \sig{Time} = $\{\sig{day}, \sig{night}\}$, \sig{Front-Vehicle} = \{\sig{exist}, \sig{not-exist}\} indicates whether in the associated ground truth, there exists a vehicle in the front. Consider~$f$ to be a DNN that performs front-vehicle detection. The first output of~$f$, denoted as $f_1$,  generates~$\sig{1}$ for indicating ``\sig{DETECTED}" (a front vehicle is detected, as shown in Figure~\ref{fig.example.categorization} with dashed rectangles) and~$\sig{0}$ for ``\sig{UNDETECTED}". Based on the input image, let $evl$ be the function that checks if the detection result made by the DNN is consistent with the ground truth, i.e., $evl(\sig{in}, f(\sig{in})) = \sig{true} $ \emph{iff} $(\sig{Front-Vehicle.exist}(\sig{in}) = \sig{true} \leftrightarrow f_1(\sig{in})=\sig{1})$.

As illustrated in Figure~\ref{fig.example.categorization}, let $\mathcal{D}_{test} = \{\sig{in}_{1},\ldots,\sig{in}_{9}\}$. For  $\sig{in}_1$ and $\sig{in}_2$, one can observe from Figure~\ref{fig.example.categorization} that $evl(\sig{in}_1, f(\sig{in}_1)) = evl(\sig{in}_2, f(\sig{in}_2))$. Then $f$ demonstrates believed equivalence over~$\mathcal{C}$ under $\mathcal{D}_{test}$, provided that the following conditions also hold. 

\begin{itemize}
    \item $evl(\sig{in}_3, f(\sig{in}_3)) = evl(\sig{in}_4, f(\sig{in}_4))$,
    \item $evl(\sig{in}_5, f(\sig{in}_5)) = evl(\sig{in}_{6}, f(\sig{in}_{6}))\\ =  evl(\sig{in}_{7}, f(\sig{in}_{7}))$, 
    \item $evl(\sig{in}_8, f(\sig{in}_8)) = evl(\sig{in}_{9}, f(\sig{in}_{9}))$.
\end{itemize}  

\noindent \textbf{(Remark)} It is possible to  control the definition of $evl$ such that demonstrating believed equivalence is independent of the computed value $f(\sig{in})$, thereby creating a partitioning of the input space and its associated properties (i.e., labels). For example, consider the following function. $evl(\sig{in}_1, f(\sig{in}_1)) = (v_1, \ldots, v_m)$ where $v_i = j$ iff $c_{i,j}(\sig{in}) = \sig{true}$.
This evaluation function assigns every input with its associated element index for each category, thereby being independent of~$f$. 

\subsection{Coverage based on Combinatorial Testing}\label{subsec.coverage.combinatorial.testing}

Test coverage can measure the degree that a system is tested against the provided test set. In the following lemma, we associate our formulation of believed equivalence with the \emph{combinatorial explosion} problem of achieving full test coverage. Precisely, the size of the test set to demonstrate believed equivalence while completely  covering all combinations as specified in the categories is  exponential to the number of categories. 

\vspace{1mm}

\begin{lemma}\label{lemma.exponential}
Let $S= \min\{|C_1|, \ldots, |C_m|\}$. Provided that $f \models^{evl}_{\mathcal{D}_{test}} \mathcal{C}$, the minimum size of~$\mathcal{D}_{test}$ to satisfy the below condition is larger or equal to~$S^{m}$.
\begin{equation}\label{eq.exponential}
\begin{split}
\forall  j_1 \in \{1, \ldots, |C_1|\}, \ldots, j_m \in \{1, \ldots, |C_m|\}: \\\exists \sig{in} \in \mathcal{D}_{test}: \bigwedge^{m}_{i=1} c_{i,j_{i}}(\sig{in}) = \sig{true}
\end{split}
\end{equation}

\end{lemma}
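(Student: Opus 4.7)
The plan is to prove this by a direct counting/pigeonhole argument, observing that the believed-equivalence hypothesis is essentially inactive for the lower bound: what does the work is the earlier structural assumption that for every input $\sig{in} \in \mathcal{D}_{in}$ and every category index $i$, there is exactly one $j \in \{1, \ldots, |C_i|\}$ with $c_{i,j}(\sig{in}) = \sig{true}$.

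First, I would use that uniqueness assumption to define, for each $\sig{in} \in \mathcal{D}_{test}$, a signature tuple $\sigma(\sig{in}) := (j_1, \ldots, j_m)$ where $j_i$ is the unique index with $c_{i,j_i}(\sig{in}) = \sig{true}$. Thus $\sigma$ is a well-defined map from $\mathcal{D}_{test}$ into the product set $J := \{1, \ldots, |C_1|\} \times \cdots \times \{1, \ldots, |C_m|\}$, and a single test case contributes a witness to at most one tuple in $J$. Next, I would reinterpret the coverage condition~(\ref{eq.exponential}): it says precisely that the image $\sigma(\mathcal{D}_{test})$ equals all of $J$, i.e., $\sigma$ is surjective onto $J$.

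From surjectivity of $\sigma$ the bound is immediate: $|\mathcal{D}_{test}| \geq |J| = \prod_{i=1}^{m} |C_i|$. Since $|C_i| \geq S$ for every $i$ by the definition $S = \min\{|C_1|, \ldots, |C_m|\}$, I would conclude $\prod_{i=1}^{m} |C_i| \geq S^m$, giving $|\mathcal{D}_{test}| \geq S^m$ as required. I would also briefly note that the hypothesis $f \models^{evl}_{\mathcal{D}_{test}} \mathcal{C}$ is consistent with (and does not weaken) this bound, since believed equivalence only constrains which $evl$-values co-occur on test cases sharing a signature, and therefore does not permit a single test case to witness two distinct tuples in $J$.

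There is no real obstacle here; the only thing to be careful about is not over-claiming. The inequality is tight when the $|C_i|$ are all equal to $S$, and strict otherwise, so I would state the conclusion as $|\mathcal{D}_{test}| \geq \prod_i |C_i| \geq S^m$ to make the stronger intermediate bound visible. If space permits, I would end with a one-line remark that the bound is achievable by taking one representative test case per tuple in $J$ (assuming such representatives exist in $\mathcal{D}_{in}$), which shows the bound $S^m$ is essentially the best possible general lower bound expressible in terms of $S$ and $m$ alone.
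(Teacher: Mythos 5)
Your proposal is correct and follows essentially the same argument as the paper's proof: each test case determines exactly one element per category (by the uniqueness assumption), so covering all $\prod_i |C_i| \geq S^m$ combinations forces $|\mathcal{D}_{test}| \geq S^m$, with the believed-equivalence hypothesis playing no restrictive role since one test case per combination already satisfies it. Your signature-map formulation is just a cleaner packaging of the same counting step.
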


\begin{proof}
Recall in Definition~\ref{def.believed.equiv}, $f \models^{evl}_{\mathcal{D}_{test}} \mathcal{C}$ requires any ``two" test cases $\sig{in}_1, \sig{in}_2 \in \mathcal{D}_{test}$ where $\forall i, j: c_{i,j}(\sig{in}_1) = c_{i,j}(\sig{in}_2)$,  $evl(\sig{in}_1, f(\sig{in}_1))$ equals $evl(\sig{in}_2, f(\sig{in}_2))$. Therefore, for Equation~\ref{eq.exponential}, if there exists only one element $\sig{in}\in\mathcal{D}_{test}$ in each element combination $\bigwedge^{m}_{i=1}c_{i,j_{i}}(\sig{in}) = \sig{true}$ (in Eq.~\ref{eq.exponential}), it still satisfies Definition~\ref{def.believed.equiv}. The number of all possible element combinations is $|C_1|\times \ldots \times|C_m|\geq S^{m}$. One needs to cover each element combination with one test case, and a test case satisfying one combination cannot satisfy another combination, due to our previously stated assumption that $\forall i \exists! j \in \{1, \ldots, |C_i|\}: c_{i,j}(\sig{in}) = \sig{true}$. Therefore the lemma holds. \qed
\end{proof}

\vspace{1mm}
As the number of categories can easily achieve~$50$, even when each category has only~$2$ elements, achieving full coverage using the condition in Eq.~\ref{eq.exponential} requires at least $2^{50}$ different test cases; this is in most applications unrealistic. Therefore, we consider asserting full coverage using $\gamma$-way combinatorial testing~\cite{nie2011survey}, which only requires the set of test cases to cover, for every pair ($\gamma=2$) or triple ($\gamma=3$)  of categories, all possible combinations. The following lemma states the following: Provided that~$\gamma$ is a \emph{constant}, the number of test cases to achieve full coverage can be bounded by a polynomial with~$\gamma$ being its highest degree.

\vspace{1mm}
\begin{lemma}[Coverage using $\gamma$-way combinatorial testing]\label{lemma.combinatoral.testing}
Let $S= \max\{|C_1|, \ldots, |C_m|\}$. Provided that $f \models^{evl}_{\mathcal{D}_{test}} \mathcal{C}$, the minimum size of~$\mathcal{D}_{test}$ to satisfy $\gamma$-way combinatorial testing, characterized by the below condition,  is bounded by~${{m}\choose{\gamma}} S^{\gamma}$.
\begin{equation}\label{eq.combinatorial.testing}
\begin{split}
\forall a_1, a_2, \ldots, a_{\gamma} \in \{1, \ldots, m\}: 
\\
\forall  j_{a_{1}} \in \{1, \ldots, |C_{a_{1}}|\}, \ldots, 
j_{a_{\gamma}} \in \{1, \ldots, |C_{a_{\gamma}}|\}: \\
\exists \sig{in} \in \mathcal{D}_{test}: \bigwedge^{\gamma}_{i=1} c_{a_i,j_{a_i}}(\sig{in}) = \sig{true}
\end{split}
\end{equation}

\end{lemma}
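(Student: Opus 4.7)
The plan is to give a direct constructive upper bound by enumerating the requirements imposed by Eq.~\ref{eq.combinatorial.testing} and observing that a single test case witnesses at least one requirement per choice of $\gamma$ categories. Unlike Lemma~\ref{lemma.exponential}, which needed witnesses for all $m$-wise combinations, $\gamma$-way coverage only constrains $\gamma$-element projections, so the count shrinks from exponential in $m$ to polynomial of degree $\gamma$.

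First I would count the requirements. The outer quantifier in Eq.~\ref{eq.combinatorial.testing} ranges over unordered $\gamma$-subsets of $\{1, \ldots, m\}$, yielding exactly $\binom{m}{\gamma}$ choices. For each such subset $\{a_1, \ldots, a_\gamma\}$, the inner quantifier ranges over element tuples $(j_{a_1}, \ldots, j_{a_\gamma})$, and there are $\prod_{i=1}^{\gamma} |C_{a_i}| \leq S^{\gamma}$ such tuples by the definition of $S$ as the maximum category size. Multiplying the two counts bounds the total number of element combinations to be witnessed by at most $\binom{m}{\gamma} S^{\gamma}$.

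Next I would show that (at most) one test case per requirement suffices. Recall the standing assumption that every input activates exactly one element per category, so any single test case $\sig{in}$ simultaneously witnesses one element combination for every $\gamma$-subset of categories. Consequently, for any outstanding requirement I can add (or reuse) one test case satisfying $\bigwedge_{i=1}^{\gamma} c_{a_i, j_{a_i}}(\sig{in}) = \sig{true}$, and the resulting $\mathcal{D}_{test}$ has size bounded by the number of requirements, i.e., by $\binom{m}{\gamma} S^{\gamma}$.

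Consistency with believed equivalence is not a genuine obstacle: if two added test cases happen to coincide on all $m$ categorical assignments, then $f \models^{evl}_{\mathcal{D}_{test}} \mathcal{C}$ already forces their $evl$ values to match, so no contradiction arises (and one can in fact reuse the earlier test case, only tightening the bound). The main point requiring care—rather than a true difficulty—is avoiding overcounting: treating the outer quantifier as unordered subsets instead of ordered tuples (which would inflate the bound by a factor $\gamma!$), and using $|C_{a_i}| \leq S$ rather than equality, so that categories strictly smaller than $S$ are absorbed by the maximum in the definition of $S$. With these combinatorial details pinned down, the stated bound follows.
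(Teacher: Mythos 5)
Your argument is correct and follows essentially the same route as the paper's proof: count the $\binom{m}{\gamma}$ choices of category subsets times at most $S^{\gamma}$ element tuples each, cover each combination with one test case, and note (as in Lemma~\ref{lemma.exponential}) that such a one-test-case-per-combination selection still demonstrates believed equivalence. Your added remarks on unordered subsets versus ordered tuples and on $|C_{a_i}| \leq S$ only make explicit details the paper leaves implicit.
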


\begin{proof}
 To satisfy Equation~\ref{eq.combinatorial.testing}, the first universal term creates ${{m}\choose{\gamma}}$ choices, where for each choice, the second universal term generates at most $S^{\gamma}$ combinations. Therefore, the total number of  combinations induced by two layers of universal quantification equals ${{m}\choose{\gamma}}S^{\gamma}$, and a careful selection of test cases in $\mathcal{D}_{test}$, with each combination covered by exactly one test case, leads to the size of  $\mathcal{D}_{test}$ bounded by ${{m}\choose{\gamma}}S^{\gamma}$.
Finally, similar to Lemma~\ref{lemma.exponential}, covering each combination with only one test case suffices to satisfy $f \models^{evl}_{\mathcal{D}_{test}} \mathcal{C}$. Therefore the lemma holds. \qed
\end{proof}

\begin{figure}[t]
\centering
\includegraphics[width=0.55\columnwidth]{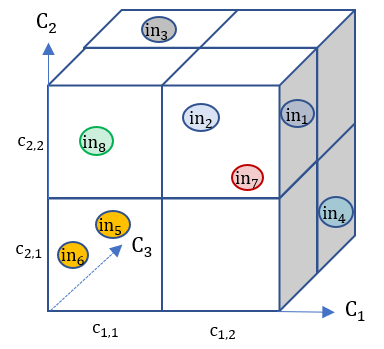}

\caption{Believed equivalence integrated in $2$-way combinatorial testing; the color of each circle indicates the computed value for the evaluation function $evl$.}
\label{fig.combinatorial.testing}
\end{figure}

Figure~\ref{fig.combinatorial.testing} illustrates an example for coverage with $2$-way combinatorial testing, where $C = \{C_1, C_2, C_3\}$ with each $C_i$ ($i \in \{1, 2, 3\}$) having $c_{i,1}$ and $c_{i,2}$ accordingly. Let $\mathcal{D}_{test} = \{\sig{in}_1, \ldots, \sig{in}_6\}$ (i.e., excluding $\sig{in}_7$ and $\sig{in}_8$). Apart from $\mathcal{D}_{test}$ demonstrating believed equivalence,
 $\mathcal{D}_{test}$ also satisfies the condition as specified in Lemma~\ref{lemma.combinatoral.testing}. By arbitrary picking~$2$ categories among $\{C_1, C_2, C_3\}$, for instance $C_1$ and $C_2$, all combinations are covered by elements in $\mathcal{D}_{test}$ ($c_{1,1}c_{2,1} \mapsto \sig{in}_5, \sig{in}_6$; $c_{1,1}c_{1,2} \mapsto \sig{in}_3$; $c_{1,2}c_{2,1} \mapsto \sig{in}_4$;  $c_{1,2}c_{2,2} \mapsto \sig{in}_1, \sig{in}_2$).  

\subsection{Encountering $\mathcal{D}_{in} \setminus \mathcal{D}_{test}$}\label{subsec.refinement}

As the definition of believed equivalence is based on the test set $\mathcal{D}_{test}$, upon new test cases outside the test set, i.e., $\mathcal{D}_{in} \setminus \mathcal{D}_{test}$, the function under analysis may not demonstrate consistent behavior. We first present a formal definition on consistency. 

\vspace{2mm}
\begin{defn}\label{def.consistency}
A test case $\sig{in'} \in \mathcal{D}_{in} \setminus \mathcal{D}_{test}$ is
\textbf{consistent} with the believed equivalence $f \models^{evl}_{\mathcal{D}_{test}} \mathcal{C}$ when the following condition holds for all test cases $\sig{in} \in \mathcal{D}_{test}$:
\begin{equation*}
    (\forall i, j: c_{i,j}(\sig{in}) = c_{i,j}(\sig{in'})) \rightarrow evl(\sig{in}, f(\sig{in})) = evl(\sig{in'}, f(\sig{in'}))
\end{equation*}
Otherwise, $\sig{in'} \in \mathcal{D}_{in} \setminus \mathcal{D}_{test}$ is \textbf{inconsistent} with the believed equivalence $f \models^{evl}_{\mathcal{D}_{test}} \mathcal{C}$.

\end{defn}

\vspace{2mm}
The following result show that, under a newly encountered consistent test case, one can add it to the existing test set while still demonstrating believed equivalence. 

\vspace{2mm}
\begin{lemma}\label{lemma.extension.consistency}
If a test case $\sig{in'} \in \mathcal{D}_{in} \setminus \mathcal{D}_{test}$ is consistent with the believed equivalence $f\models^{evl}_{\mathcal{D}_{test}} \mathcal{C}$, then $f \models^{evl}_{\mathcal{D}_{test} \cup\{\sig{in'}\}} \mathcal{C}$. 
\end{lemma}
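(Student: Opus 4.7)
The plan is to unfold the definition of $f \models^{evl}_{\mathcal{D}_{test}\cup\{\sig{in'}\}} \mathcal{C}$ from Definition~\ref{def.believed.equiv} and verify the required implication for every pair of test cases drawn from the enlarged set $\mathcal{D}_{test}\cup\{\sig{in'}\}$. Since believed equivalence is a universally quantified statement over ordered pairs, I would proceed by a simple case split on where each of the two test cases $\sig{in}_1,\sig{in}_2$ comes from.

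First I would handle the case $\sig{in}_1,\sig{in}_2 \in \mathcal{D}_{test}$: here the conclusion follows immediately from the hypothesis $f \models^{evl}_{\mathcal{D}_{test}} \mathcal{C}$. Next I would handle the mixed case where exactly one of them, say $\sig{in}_2$, equals $\sig{in'}$ and the other lies in $\mathcal{D}_{test}$: whenever the element valuations coincide, Definition~\ref{def.consistency} (instantiated with $\sig{in} := \sig{in}_1$ and $\sig{in'} := \sig{in'}$) yields $evl(\sig{in}_1,f(\sig{in}_1)) = evl(\sig{in'},f(\sig{in'}))$, which is exactly what is needed. The symmetric subcase (with $\sig{in}_1 = \sig{in'}$) is identical up to relabeling. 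Finally the degenerate case $\sig{in}_1 = \sig{in}_2 = \sig{in'}$ is trivial because both sides of the conclusion reduce to the same expression $evl(\sig{in'},f(\sig{in'}))$.

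Because Definition~\ref{def.believed.equiv} quantifies over arbitrary pairs (not merely distinct pairs), I need to make sure the case analysis is genuinely exhaustive; I expect this bookkeeping to be the only potentially error-prone point, and certainly the main, if minor, obstacle of the proof. No new machinery or auxiliary constructions are required, and in particular the proof does not depend on the structure of $\mathcal{C}$, on the evaluation function $evl$, or on the specific form of $f$; it is a purely logical consequence of concatenating Definition~\ref{def.believed.equiv} on $\mathcal{D}_{test}$ with Definition~\ref{def.consistency} for the newcomer $\sig{in'}$.
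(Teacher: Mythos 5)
Your proof is correct and is essentially the same argument as the paper's, which simply asserts the lemma as a direct implication of Definition~\ref{def.believed.equiv} and Definition~\ref{def.consistency}; your case split on whether each of $\sig{in}_1,\sig{in}_2$ lies in $\mathcal{D}_{test}$ or equals $\sig{in'}$ is just the explicit bookkeeping behind that one-line claim. Nothing is missing.
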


\begin{proof}
A direct implication from Definition~\ref{def.believed.equiv} and~\ref{def.consistency}. \qed
\end{proof}

\vspace{2mm}
Consider again the example in Figure~\ref{fig.combinatorial.testing}. Provided that $\mathcal{D}_{test} = \{\sig{in}_1, \ldots, \sig{in}_6\}$, one can observe that $\sig{in}_8$ is consistent with $f \models^{evl}_{\mathcal{D}_{test}} \mathcal{C}$ (as for $c_{1,1}c_{2,2}c_{3,1}$, there exists no test case from $\mathcal{D}_{test}$), while $\sig{in}_7$ is inconsistent (the evaluated result for~$\sig{in}_7$ is not the same as $\sig{in}_1$ and $\sig{in}_2$).

Encountering a new test case $\sig{in'} \in \mathcal{D}_{in} \setminus \mathcal{D}_{test}$ that generates inconsistency, there are two methods to resolve the inconsistency depending on whether~$f$ is implemented correctly.

\begin{itemize}
\item \textbf{(When $f$ is wrong - function modification)} Function modification refers to the change of $f$ to $f'$ such that the evaluation over $\sig{in'}$ turns consistent; it is used when $f$ is engineered wrongly. For the example illustrated in Figure~\ref{fig.combinatorial.testing}, a modification of~$f$ shall ensure that $\sig{in}_1$, $\sig{in}_2$, and $\sig{in}_7$ have the same evaluated value (demonstrated using the same color). 

\item \textbf{(When $f$ is correct - refining the believed equivalence class)} When~$f$ is implemented correctly, it implies that our believed equivalence, based on the currently defined category~$\mathcal{C}$,  is too coarse. Therefore, it is necessary to refine~$\mathcal{C}$ such that the created believed equivalence separates the newly introduced test case from others. To ease understanding, consider again the example in Figure~\ref{fig.example.categorization}. When one encounters an image where the front vehicle is a white trailer truck, and under heavy snows, we envision that the front vehicle can be undetected. Therefore, the currently maintained categories shall be further modified to 

\begin{itemize}
\item differentiate images by snowy conditions (preferably via proposing a new category), and 
\item differentiate images having front vehicles in white from front vehicles having non-white colors (preferably via refining an existing element in a  category). 
\end{itemize}

\end{itemize}

In the next section, we mathematically define the concept of \emph{refinement} to cope with the above two types of changes in categories.

\subsection{Believed Equivalence Refinement}\label{subsec.coverage.decrease.under.refinement}

\begin{defn}\label{def.refinement.cut}
Given~$\mathcal{C}$ and $\mathcal{D}_{test}$, a \textbf{refinement-by-cut}~$RC(\mathcal{C}, i, j)$ generates a new set of categories by first removing $c_{i,j}$ in~ $C_i$, followed by adding two elements $c_{i,j,1}$ and $c_{i,j,2}$ to~$C_{i}$, where the following conditions hold.
\begin{itemize}
    \item For any test case $\sig{in} \in \mathcal{D}_{test}$ such that $c_{i,j}(\sig{in}) =  \sig{true}$, exclusively either $c_{i,j,1}(\sig{in}) =  \sig{true}$ or $c_{i,j,2}(\sig{in}) =  \sig{true}$.
    \item For any test case $\sig{in} \in \mathcal{D}_{test}$ such that $c_{i,j}(\sig{in}) =  \sig{false}$,  $c_{i,j,1}(\sig{in}) =  \sig{false}$ and $c_{i,j,2}(\sig{in}) =  \sig{false}$.    
    
\end{itemize}

\end{defn}

To ease understanding, Figure~\ref{fig.cuts} shows the visualization for two such cuts.\footnote{Note that Figure~\ref{fig.cuts}, the cut using a plane is only for illustration purposes; the condition as specified in Definition~\ref{def.refinement.cut} allows to form cuts in arbitrary shapes.} The cut in the left  refines $c_{2,2}$ to $c_{2,2,1}$ and $c_{2,2,2}$, and the cut in the right refines $c_{1,2}$ to $c_{1,2,1}$ and $c_{1,2,2}$. One can observe that the left cut can enable a refined believed equivalence, as the cut has separated $\sig{in}_7$ from $\sig{in}_1$ and $\sig{in}_2$.  This brings us to the definition of \emph{effective refinement cuts}.

\begin{defn}\label{def.effective.refinement.cut}
Given~$f$, $\mathcal{C} = \{C_1, \ldots, C_m\}$, $\mathcal{D}_{test}$ where $f \models^{evl}_{\mathcal{D}_{test}} \mathcal{C}$, and a new test case $\sig{in'} \in \mathcal{D}_{in} \setminus \mathcal{D}_{test}$,  a refinement-by-cut $RC(\mathcal{C}, i, j)$ is \textbf{effective} over 
$\sig{in'}$, if $f \models^{evl}_{\mathcal{D}_{test} \cup \{\sig{in'}\}} RC(\mathcal{C}, i, j)$.
\end{defn}

\begin{figure}[t]
\centering
\includegraphics[width=\columnwidth]{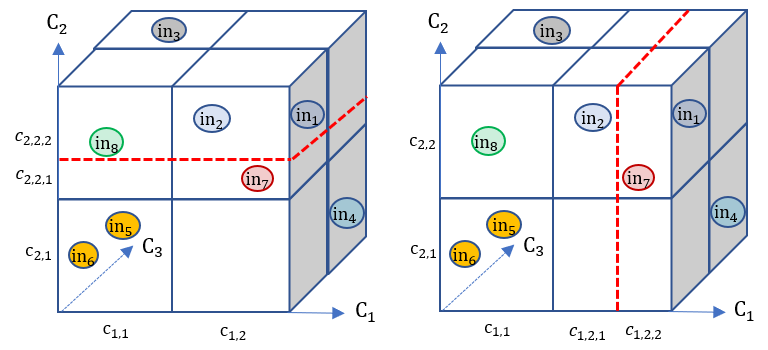}

\caption{Effective (left) and ineffective (right) cuts}
\label{fig.cuts}
   \vspace{-2mm}
\end{figure}

Apart from cuts, the second type of refinement is to add  another category. 

\begin{defn}\label{def.refinement.expansion}
Given~$\mathcal{C} = \{C_1, \ldots, C_m\}$, $\mathcal{D}_{test}$ and a new test case $\sig{in'} \in \mathcal{D}_{in} \setminus \mathcal{D}_{test}$, an \textbf{refinement-by-expansion}~$RE(\mathcal{C})$ generates a new set of categories by adding $C_{m+1}$ to $\mathcal{C}$, where $ C_{m+1} = \{c_{m+1,1},c_{m+1,2}\}$  with elements satisfying the following conditions.
\begin{itemize}
    \item For any test case $\sig{in} \in \mathcal{D}_{test}$, $c_{m+1,1}(\sig{in}) =  \sig{true}$ and $c_{m+1,2}(\sig{in}) =  \sig{false}$. 
    \item  $c_{m+1,1}(\sig{in'}) =  \sig{false}$ and $c_{m+1,2}(\sig{in'}) =  \sig{true}$. 
    
\end{itemize}

\end{defn}

In contrast to refinement cuts where effectiveness is subject to how cuts are defined, the following lemma paraphrases that  refinement-by-expansion is by-definition effective. 

\begin{lemma}\label{def.effective.refinement.expansion}
Given~$f$, $\mathcal{C} = \{C_1, \ldots, C_m\}$, $\mathcal{D}_{test}$ where $f \models^{evl}_{\mathcal{D}_{test}} \mathcal{C}$, and a new test case $\sig{in'} \in \mathcal{D}_{in} \setminus \mathcal{D}_{test}$,  a refinement-by-expansion $RE(\mathcal{C})$ is \textbf{effective} over $\sig{in'}$, i.e., $f \models^{evl}_{\mathcal{D}_{test} \cup \{\sig{in'}\}} RE(\mathcal{C})$.
\end{lemma}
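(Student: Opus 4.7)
The plan is to show directly from the definitions that every pair of test cases in $\mathcal{D}_{test} \cup \{\sig{in'}\}$ satisfies the implication required by Definition~\ref{def.believed.equiv} with respect to the expanded category set $RE(\mathcal{C}) = \{C_1, \ldots, C_m, C_{m+1}\}$. I would split the case analysis into three groups according to where the two chosen test cases come from: (i) both lie in $\mathcal{D}_{test}$, (ii) one is $\sig{in'}$ and the other lies in $\mathcal{D}_{test}$, and (iii) both are $\sig{in'}$.

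For group (i), I would use the hypothesis $f \models^{evl}_{\mathcal{D}_{test}} \mathcal{C}$ together with the first bullet of Definition~\ref{def.refinement.expansion}. Since every $\sig{in} \in \mathcal{D}_{test}$ has $c_{m+1,1}(\sig{in}) = \sig{true}$ and $c_{m+1,2}(\sig{in}) = \sig{false}$, agreement on the new category $C_{m+1}$ is automatic for any two members of $\mathcal{D}_{test}$. Thus if two such test cases agree on all elements of $RE(\mathcal{C})$, they in particular agree on all elements of $\mathcal{C}$, and the original believed equivalence gives equality of the $evl$-values. Group (iii) is trivial since the implication has identical left and right sides.

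The interesting case is group (ii), and this is where the second bullet of Definition~\ref{def.refinement.expansion} does the work. For any $\sig{in} \in \mathcal{D}_{test}$ we have $c_{m+1,1}(\sig{in}) = \sig{true}$ while $c_{m+1,1}(\sig{in'}) = \sig{false}$, so the antecedent $\forall i, j: c_{i,j}(\sig{in}) = c_{i,j}(\sig{in'})$ already fails at $i = m+1$, $j=1$. The implication in Definition~\ref{def.believed.equiv} therefore holds vacuously, regardless of how $evl$ evaluates on $\sig{in'}$. Combining the three cases gives $f \models^{evl}_{\mathcal{D}_{test} \cup \{\sig{in'}\}} RE(\mathcal{C})$.

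There is no real obstacle here; the statement is essentially a sanity check that adding a fresh category which flags exactly $\sig{in'}$ suffices to insulate the new test case from all previous ones. The only thing I would be careful about is making sure that the universal quantifier in Definition~\ref{def.believed.equiv} is read over \emph{all} pairs including the symmetric one $(\sig{in'}, \sig{in})$, so that the vacuous-antecedent argument is applied in both directions; but since equality in the antecedent and consequent is symmetric, this adds nothing substantive.
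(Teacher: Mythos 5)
Your proof is correct and follows essentially the same argument as the paper: pairs within $\mathcal{D}_{test}$ automatically agree on the new category $C_{m+1}$ so the original believed equivalence applies, and mixed pairs involving $\sig{in'}$ satisfy the implication vacuously because $c_{m+1,1}$ differs. The only cosmetic difference is that the paper packages the mixed-pair case as a consistency check and invokes Lemma~\ref{lemma.extension.consistency}, whereas you unfold Definition~\ref{def.believed.equiv} directly.
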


\begin{proof}

\begin{enumerate}
    \item By introducing the new category $C_{m+1}$ following the condition as specified in Definition~\ref{def.effective.refinement.expansion}, consider $\sig{in}_1, \sig{in}_2 \in \mathcal{D}_{test}$ where $\forall i \in \{1, \ldots,m\}, j \in \{1, \ldots, |C_i|\}: c_{i,j}(\sig{in}_1) = c_{i,j}(\sig{in}_2)$, then as $c_{m+1,1}(\sig{in}_1) = c_{m+1,1}(\sig{in}_2)= \sig{true}$ and $c_{m+1,2}(\sig{in}_1) = c_{m+1,2}(\sig{in}_2)= \sig{false}$. We can expand the first universal quantifier such that  $\forall i \in \{1, \ldots,m, m+1\}, j \in \{1, \ldots, |C_i|\}: c_{i,j}(\sig{in}_1) = c_{i,j}(\sig{in}_2)$ also holds. As $evl(\sig{in}_1, f(\sig{in}_1)) = evl(\sig{in}_2, f(\sig{in}_2))$, so $f \models^{evl}_{\mathcal{D}_{test}} RE(\mathcal{C})$. 
    
    \item For any test case $\sig{in} \in \mathcal{D}_{test}$, we have $c_{m+1,1}(\sig{in}) \neq c_{m+1,1}(\sig{in'})$. Therefore, the inconsistency condition as specified in Definition~\ref{def.consistency} does not hold, i.e., $\sig{in'}$ is consistent with $f \models^{evl}_{\mathcal{D}_{test}} RE(\mathcal{C})$. By applying Lemma~\ref{lemma.extension.consistency} we have $f \models^{evl}_{\mathcal{D}_{test} \cup \{\sig{in'}\}} RE(\mathcal{C})$. \qed
\end{enumerate}
\end{proof}

\noindent\textbf{(Coverage under Refinement)} Although introducing refinement over categories can make $\sig{in'}$ consistent with $\mathcal{D}_{test}$, it may also make the previously stated $100\%$ coverage claim (using $\gamma$-way combinatorial testing) no longer valid. An example can be found in the left of Figure~\ref{fig.cuts}, where a full coverage $2$-way combinatorial testing no longer holds due to the cut. By analyzing  category pairs $C_2$ and $C_3$, one realizes that achieving $100\%$ coverage requires an additional test case~$\hat{\sig{in}}$ where $c_{2,2,1}(\hat{\sig{in}}) = \sig{true}$ and $ c_{3,2}(\hat{\sig{in}}) = \sig{true}$.

For $\gamma$-way combinatorial testing, in the worst case, the newly introduced element ($c_{m+1,2}$ in the refinement-by-expansion case; $c_{i,j,1}$ or $c_{i,j,2}$ in the refinement-by-cut case) will only be covered by the newly encountered test case $\sig{in'} \not\in \mathcal{D}_{test}$. As one element has been fixed, the number of additional test cases to be introduced to recover full coverage can be conservatively polynomially bounded by ${{m}\choose{\gamma -1}} S^{\gamma}$, where $S= \max\{|C_1|, \ldots, |C_m|\}$. Under the case of $\gamma$ being a constant, the polynomial is one degree lower than the one as constrained in Lemma~\ref{lemma.combinatoral.testing}.

Finally, we omit technical formulations, but as refinement is operated based on the set of categories, it is possible to define the hierarchy of coverage accordingly utilizing the refinement concept. As a simple example, consider the element $\sig{vehicle}$ being refined to $\sig{vehicle}_{\sig{car}}$, $\sig{vehicle}_{\sig{truck}}$ and $\sig{vehicle}_{\sig{others}}$ to reflect different types of vehicle and their different performance profile. The coverage on $\sig{vehicle}$ can be an aggregation over the computed coverage from three refined elements.

\section{Refinement-by-cut for Inputs over Reals}\label{sec.application.in.dnn}

\begin{figure}[t]
\centering
\includegraphics[width=\columnwidth]{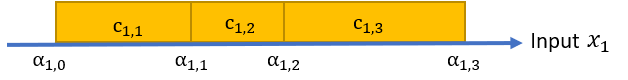}
\caption{Defining a category $C_1 = \{c_{1,1}, c_{1,2}, c_{1,3}\}$ using four values $\alpha_{1,0},\alpha_{1,1}, \alpha_{1,2}, \alpha_{1,3}$, acting on input variable $x_1$}
\label{fig.partition}
\end{figure}

\begin{figure}
\centering
\includegraphics[width=0.8\columnwidth]{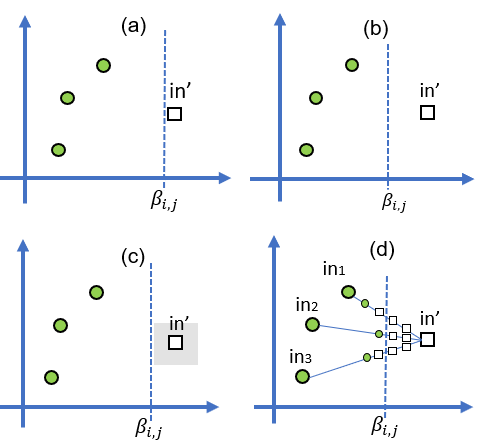}
 
\caption{Optimizing from an arbitrary effective cut (a) to a cut being distant to all points (b), to a cut with local robustness guarantees (c), to  heuristics to check value consistency alone the direction to $3$-nearest neighbors (d)}
\label{fig.optimization}
     %\vspace{-5mm}
\end{figure}

\subsection{Refinement-by-cut as Optimization}

Finally, we address the special case where the refinement can be computed \emph{analytically}. Precisely, we consider the input domain $\mathcal{D}_{in}$ being $\mathbb{R}^{m}$, i.e., the system takes~$m$ inputs $x_{1}, \ldots, x_m$ over reals, where $m$ matches the size of the category set. We also consider that every category $C_{i} = \{c_{i,1}, \ldots, c_{i,i_k}\}$ is characterized by $k+1$ ascending values $\alpha_{i,0}, \alpha_{i,1}, \ldots, \alpha_{i,i_k}$, such that $C_i$ performs a \textbf{simple partitioning} over the $i$-th input, i.e., for $\sig{in} = (x_1, \ldots, x_{m})$, $c_{i,j}(\sig{in}) = \sig{true}$ \emph{iff} $x_{i} \in (\alpha_{i, j-1}, \alpha_{i, j}]$. Figure~\ref{fig.partition} illustrates the idea. The above formulation fits well for analyzing systems such as deep neural networks where each input is commonly normalized to a value within the interval $[-1, 1]$.

Using Definition~\ref{def.refinement.cut}, a refinement-by-cut $RC(\mathcal{C}, i, j)$ implies to find a value $\beta_{i,j} \in  (\alpha_{i, j-1}, \alpha_{i, j}]$ such that (1)
$c_{i,j,1} = \sig{true}$ \emph{iff} $x_i \in (\alpha_{i, j-1}, \beta_{i,j}]$ and (2) $c_{i,j,2} = \sig{true}$ \emph{iff} $x_i \in (\beta_{i,j}, \alpha_{i, j}]$. Provided that $\sig{in'} = (x_1', \ldots, x_m')$ being the newly encountered test case, we present one possible analytical approach for finding such a cut, via solving the following optimization problem:

\begin{equation}\label{eq.ana.obj}
    \sig{maximize} \;\; \epsilon 
\end{equation}
subject to:

     \vspace{-3mm}
     
\begin{equation}[\text{Effective}]\label{eq.ana.effective}
\begin{array}{l}
\;\;\;\; f \models^{evl}_{\mathcal{D}_{test} \cup \{\sig{in'}\}} RC(\mathcal{C}, i, j)
\end{array}
\end{equation}

     \vspace{-3mm}

\begin{equation}[\text{Distant}]\label{eq.ana.epsilon}
\begin{array}{l}
    \forall \sig{in} = (x_1, \ldots, x_m) \in \mathcal{D}_{test} \cup \{\sig{in'}\}: \\  \;\;\;\; c_{i,j}(\sig{in}) = \sig{true} \rightarrow  |\beta_{i,j} - x_i| \geq \eta
\end{array}
\end{equation}

     \vspace{-3mm}

\begin{equation}[\text{Robust}]\label{eq.ana.robustness}
\begin{array}{l}
    \forall \sig{in}= (x_1, \ldots, x_m) \in \mathbb{R}^m:\\
  \;\;\;(\forall i \in \{1, \ldots, m\}:
   |x_i - x_i'| \leq \epsilon) \\ 
    \;\;\; \;\;\; \;\;\;\rightarrow  evl(\sig{in}, f(\sig{in})) = evl(\sig{in'}, f(\sig{in'}))
\end{array}
\end{equation}

In the above stated optimization problem, apart from constraints as specified in Equation~\ref{eq.ana.effective} which follows the standard definition as regulated in Definition~\ref{def.effective.refinement.cut} (illustrated in Figure~\ref{fig.optimization}a), we additionally impose two \emph{optional} constraints Eq.~\ref{eq.ana.epsilon} and Eq.~\ref{eq.ana.robustness}.
\begin{itemize}
    \item First, we wish to regulate the size of the newly created element such that the new believed equivalence class generalizes better by keeping a minimal distance $\epsilon$ to the boundary $\beta_{i,j}$ (illustrated in Figure~\ref{fig.optimization}b).
    
    \item  Secondly, we also specify that all inputs within the $\epsilon$-ball centered by $\sig{in'}$ demonstrate consistent behavior, thereby demonstrating robustness via behavioral equivalence  (illustrated in Figure~\ref{fig.optimization}c). 
\end{itemize}

The following lemma states the condition where the optimization problem can be solved analytically using integer linear programming. For general cases, nonlinear programming is required. 

\begin{lemma}\label{lemma.milp.encoding}
Let $f$ be a ReLU network and $evl$ be a linear function over input and output of $f$. Provided that  $C_i$ performs a simple partitioning over the $i$-th input, and every input $x_i$ is bounded by $(\alpha_{i,0}, \alpha_{i,i_k}]$ ($\alpha_{i,0}\neq -\infty$ and $\alpha_{i,i_k} \neq \infty$), the optimization problem as stated using Equation~\ref{eq.ana.obj},~\ref{eq.ana.effective},~\ref{eq.ana.epsilon}, and~\ref{eq.ana.robustness} can be encoded using mixed-integer linear programming (MILP). 
\end{lemma}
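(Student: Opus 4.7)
The plan is constructive: assemble the MILP one clause at a time, one per equation in Eq.~\ref{eq.ana.obj}--\ref{eq.ana.robustness}, and conjoin. Because $f$ is a ReLU network and every input lies in the finite box $\prod_i (\alpha_{i,0}, \alpha_{i,i_k}]$, each neuron $y=\max(0,z)$ admits the standard big-M linearization with one binary activation variable and four linear inequalities, where valid big-M constants are obtainable by interval propagation from the input box. Composing layers yields a linear expression for $f(\sig{in})$ over the combined continuous/binary variables, and linearity of $evl$ then makes every instance $evl(\sig{in}, f(\sig{in}))$ a linear MILP expression. For concrete inputs $\sig{in} \in \mathcal{D}_{test} \cup \{\sig{in'}\}$ the evaluation values and the flags $c_{i,j}(\sig{in})$ are precomputable constants, the latter because simple partitioning reduces $c_{i,j}(\sig{in})$ to a comparison of the concrete coordinate $x_i$ against the fixed thresholds $\alpha_{i,\cdot}$.

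The effective-cut constraint Eq.~\ref{eq.ana.effective} then becomes a finite family of linear disjunctions on the single scalar $\beta_{i,j} \in (\alpha_{i,j-1}, \alpha_{i,j}]$. Because $f \models^{evl}_{\mathcal{D}_{test}} \mathcal{C}$ already holds, the only pairs that can violate believed equivalence after refinement are those containing $\sig{in'}$: for each $\sig{in} \in \mathcal{D}_{test}$ that agrees with $\sig{in'}$ on every element outside the $(i,j)$ cell, satisfies $c_{i,j}(\sig{in})$, and has $evl(\sig{in}, f(\sig{in})) \neq evl(\sig{in'}, f(\sig{in'}))$, the cut $\beta_{i,j}$ must lie strictly between $x_i$ and $x_i'$. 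One auxiliary binary per offending pair suffices. The distance constraint Eq.~\ref{eq.ana.epsilon} is the absolute-value bound $|\beta_{i,j} - x_i| \geq \eta$, linearized by a standard big-M disjunction with one binary indicator per relevant case.

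The main obstacle is the universally quantified robustness constraint Eq.~\ref{eq.ana.robustness}. I handle it through the standard robustness-verification reduction: introduce a symbolic input $(x_1,\ldots,x_m)$ with the box constraint $-\epsilon \leq x_i - x_i' \leq \epsilon$, which is linear in the joint continuous variables $x_i$ and $\epsilon$, and propagate it through a fresh big-M copy of the ReLU encoding of $f$. Writing $v^\star = evl(\sig{in'}, f(\sig{in'}))$ as a precomputable constant, the universal statement ``every $\sig{in}$ in the box satisfies $evl(\sig{in}, f(\sig{in})) = v^\star$'' is reformulated as infeasibility of an inner counterexample search. The delicate step --- and the place that relies on all three hypotheses (ReLU structure of $f$, linearity of $evl$, and boundedness of the input box, which together make $evl(\cdot, f(\cdot))$ a piecewise-affine function with finite big-M constants) --- is collapsing this bilevel problem into a single MILP: once the activation binaries are fixed, the inner problem is an LP and admits strong duality, so substituting primal feasibility, dual feasibility, and the strong-duality equality, with the activation binaries left shared with the outer encoding, produces a single-level MILP expressing the required robustness. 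Taking the conjunction of the above clauses with the linear objective $\sig{maximize}\; \epsilon$ yields the claimed MILP.
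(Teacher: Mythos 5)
Your treatment of the cut and distance constraints is fine and matches the intent of the paper's remark that the separation--plane part is easy: with $\sig{in'}$ and all points of $\mathcal{D}_{test}$ concrete, the element flags and evaluation values are constants, and the conditions on the scalar $\beta_{i,j}$ are linear disjunctions handled by big-M indicators. The problem is your handling of the robustness constraint Eq.~\ref{eq.ana.robustness}. You set it up as a bilevel program (outer: maximize $\epsilon$; inner: search for a counterexample in the $\epsilon$-box) and collapse it by LP strong duality ``with the activation binaries left shared with the outer encoding.'' That step is not sound: the inner counterexample search is a \emph{mixed-integer} program, because a counterexample $x$ in the box is free to realize whatever ReLU activation pattern it induces. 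If the binaries are fixed by (shared with) the outer problem, your duality certificate only proves that no counterexample exists \emph{among inputs consistent with that one activation pattern}; inputs in the box with a different pattern are left unconstrained, so the encoding can report an $\epsilon$ larger than the true robust radius. Strong duality is not available for the full inner MILP, and making the argument exact by enumerating activation patterns is not the single MILP you describe.

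The gap is avoidable, and the paper's proof sidesteps it by decomposition rather than dualization: since $\epsilon$ enters Eq.~\ref{eq.ana.robustness} only as the radius, the constraint is equivalent to $\epsilon \le \min\{\| x-\sig{in'}\|_\infty : evl(x,f(x)) \neq evl(\sig{in'},f(\sig{in'}))\}$, i.e.\ to the minimal-adversarial-perturbation problem. That inner minimization is itself a single standard MILP (big-M encoding of the bounded ReLU network plus a linear/disjunctive encoding of the disequality, exactly the ``maximum resilience'' style encodings the paper cites), with no outer variable appearing in its feasible set, so no bilevel reduction is needed; the overall optimum is then the minimum of this robust radius and the best separation-plane value from Eqs.~\ref{eq.ana.effective}--\ref{eq.ana.epsilon}. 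If you replace your duality step with this reformulation (or otherwise justify an exact single-level encoding of the universally quantified constraint), the rest of your construction goes through.
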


\begin{proof} Finding the maximum $\epsilon$ for constraint systems using Equation~\ref{eq.ana.obj},~\ref{eq.ana.effective},~\ref{eq.ana.epsilon} can be trivially by finding the separation plane (illustrated in Figure~\ref{fig.optimization}b). Therefore, the key is to independently compute $\epsilon$ over constraints as specified in Equation~\ref{eq.ana.robustness}, and the resulting $\epsilon$ is the minimum of the computed two. If $f$ is a ReLU network with all its inputs being bounded by closed intervals, it is well known that the $f$ as well as the local robustness condition can be encoded using MILP~\cite{cheng2017maximum,fischetti2018deep,anderson2020strong} due to the piecewise linear nature of ReLU nodes. As long as $evl()$ is a linear function, it remains valid to perform encoding using MILP. \qed
\end{proof}

\noindent\textbf{(Extensions)} Here we omit technical details, but it is straightforward to extend the above constraint formulation to (1) integrate multiple cuts within the same element $c_{i,j}$ and to (2) allow multiple cuts over multiple input variables. %We leave the corresponding formulation is as future work. 

\subsection{Using $k$-nearest Neighbors to Generate Robust Refinement Proposals}~\label{subsec.k.nearest.neighbors}

     \vspace{-3mm}

Finally, as the local robustness condition in Equation~\ref{eq.ana.robustness} is computationally hard to be estimated for state-of-the-art neural networks, we consider an alternative proposal that checks robustness using the direction as suggested by $k$-nearest neighbors. The concept is illustrated using Figure~\ref{fig.optimization}d, where one needs to find a cut when encountering  $\sig{in'}$. The method proceeds by searching for its closest neighbors  $\sig{in}_1$, $\sig{in}_2$ and $\sig{in}_3$ (i.e., $k=3$), and builds three vectors pointing towards these neighbors. Then perturb $\sig{in'}$ alongside the direction as suggested by these vectors using a fixed step size $\Delta$, until the evaluated value is no longer consistent (in Figure~\ref{fig.optimization}d; all rectangular points have consistent evaluated value like $\sig{in'}$). Finally, decide the cut ($\beta_{i,j}$) by taking the minimum distance as reflected by perturbations using these three directions.  

The generated cut may further subject to refinement  (i.e., newly encountered test cases may lead to further inconsistency), but it can be viewed as a \emph{lazy approach} where (1) we take only finite directions (in contract to local robustness that takes all directions) and (2) we take a fixed step size (in contract to local robustness that has an arbitrary step size). By doing so, the granularity of refinement  shrinks on-the-fly when encountering new-yet-inconsistent test cases.

\begin{algorithm}[t]
\small
\caption{Building an initial believed equivalence class with lazy methods}\label{alg:cut}
\begin{algorithmic}[1]
\STATE{Input: (1) DNN model $f$ taking input $x_1, \ldots, x_m$, with each input variable $x_i$ bounded by $(l_{x_i}, u_{x_i}]$, (2) the collected data set $\mathcal{D}_{test}$, (3) evaluation function $evl()$, and (4) $\epsilon$ for the minimum required distance.}
\STATE{Output: Categorization ${\mathcal{C}}$ that guarantees to maintain believe equivalence $f \models^{evl}_{\mathcal{D}_{test}} \mathcal{C}$, or return \textsf{warning} when it is impossible to find a cut with interval size larger than $\eta$}
\STATE$\mathcal{C}:=\textrm{initializeCategoryUsingVariableBound}(x_1, \ldots, x_m)$
\STATE{\textbf{let} $\mathcal{D}_{exp}  := \emptyset$}
\FOR{$\textsf{in} \in \mathcal{D}_{test}$}
    \FOR{$\textsf{in'}\in \mathcal{D}_{exp}$}
        \IF{($\forall i, j: c_{i,j}(\textsf{in}) = c_{i,j}(\textsf{in}'))   \wedge   (evl(\textsf{in}, f(\textsf{in})) \neq evl(\textsf{in}', f(\textsf{in}')))$}
            \STATE{$\mathcal{D}_{neighbor}=\textrm{getKNearestNeighbor}(\textsf{in},\mathcal{D}_{exp},f)$}
            \STATE{\textbf{let} replaced := \sig{false}}
            \FOR{$i \in \{1, \ldots, m\}$}
                \STATE{$C_i':=\textrm{cutByRefinement}(\textsf{in}, C_i, \mathcal{D}_{neighbor}, f)$}
                \IF{($\textrm{isRefinementIntervalLarger}(C_i, C_i', \eta))$}
                
                    \STATE{$C_i := C_i'$}
                 \STATE{replaced := \sig{true}}
                    \STATE{\textbf{break}}
                \ENDIF
            \ENDFOR
                \IF{replaced $= \textsf{false}$}
                    \STATE{\textbf{return} \textsf{warning}}
                \ENDIF

        \ENDIF
    \ENDFOR
    \STATE{$\mathcal{D}_{exp}=\mathcal{D}_{exp}\cup \{\sig{in}\}$}
\ENDFOR
\STATE{return ${\mathcal{C}}$}
\end{algorithmic}
\end{algorithm}

Integrating the above mindset, we present in Algorithm~\ref{alg:cut} lazy cut refinement for constructing believed equivalence from scratch. Line~3 builds an initial categorization for each variable by using the corresponding bound. We use the set $\mathcal{D}_{exp}$ to record all data points that have been considered in building the believed equivalence, where $\mathcal{D}_{exp}$ is initially set to empty (line~4). For every new data $\sig{in}$ to be considered, it is compared against all data points in $\mathcal{D}_{exp}$; if exists $\sig{in'} \in \mathcal{D}_{exp}$ such that the believed equivalence is violated (line~7), then find the nearest neighbors in $D_{exp}$ (line~8) and pick one dimension to perform cut by refinement (line~8 mimicking the process in Figure~\ref{fig.optimization}(d)). Whenever the generated cut proposal $C_i'$ satisfies the distant constraint as specified in Equation~\ref{eq.ana.epsilon} (line~12), the original $C_i$ is replaced by $C_i'$. Otherwise (lines 18-20), every proposed cut can not satisfy the distant constraint as specified in Equation~\ref{eq.ana.epsilon} so the algorithm raises a warning (line~19). Finally, the data point $\sig{in}$ is inserted to $\mathcal{D}_{exp}$, representing that it has been considered. The presentation of the algorithm is simplified to ease understanding; in implementation one can employ  multiple optimization methods (e.g., use memory hashing to avoid running the inner loop at line~6 completely) to speed up the construction process.

\section{Implementation and Evaluation}\label{sec.evaluation}

We have implemented the concept of believed equivalence and the associated refinement pipeline using Python, focusing on DNN-based modules. The tool searches for feasible cuts on new test cases that lead to inconsistency. As the goal is to demonstrate full automation of the technique, we applied the description in  Section~\ref{sec.application.in.dnn} such that each category matches exactly one input variable. Therefore, cuts can be applied to the domain of any input variable. In our implementation, we maintain a fixed ordering over categories/variables to be cut  whenever refinement is triggered. 
For the evaluation in automated driving using public examples, we use the lane keeping assist (LKA) module\footnote{Details of the example is  available at \url{https://www.mathworks.com/help/reinforcement-learning/ug/imitate-mpc-controller-for-lane-keeping-assist.html}} and the adaptive cruise control (ACC) module\footnote{Details of the example is  available at \url{https://ch.mathworks.com/help/mpc/ug/adaptive-cruise-control-using-model-predictive-controller.html}} made publically available by Mathworks to demonstrate the process of refinement on its believed equivalences.

The LKA subsystem takes six inputs, including lateral velocity (LV),
yaw angle rate (YAR), lateral deviation (LD),  relative yaw angle (RYA), previous steering angle (PSA), and  measured disturbance (MD) to decide the steering angle of the vehicle. We construct a DNN for the implementation of the lane keeping assist. The DNN contains three fully connected hidden layers and adopts ReLU as activation functions.  The model is trained with the given dataset provided by Mathworks, and the accuracy reaches $99.8\%$. The output of the DNN ranges from $-1.04$ to $1.04$, meaning that the steering angle should be between the degree of $[-60, 60]$. We divide the range of its output into five classes in our evaluation function, indicating that the output (steering angle) can be one of the following classes: \textit{strong left, left turn, straight, right turn, or strong right}. Initially, each category has one element being set as the range of the corresponding input. The categories are gradually refined using cuts when inconsistent test cases appear.

\begin{table}[tp]
    \centering\small
    \begin{tabular}{r|c c c c c c}\hline
      $\#$ test cases  & LV & YAR & LD & RYA& PSA & MD  \\\hline

       0  & 1 & 1 & 1 & 1 & 1 & 1\\ 
       1000  & 28 & 14 & 13 & 5 & 2 &1\\ 
       5000 & 29 & 16 & 13 & 10 & 4 & 2\\
       10000 & 31 & 17 & 13 & 11 & 6 &2\\
       20000 & 32 & 19 & 14 & 12 & 9 &3\\
       30000 & 35 & 20 & 15 & 13 & 10 &3\\
       40000 & 39 & 21 & 18 & 13 & 12 &3\\
       50000 & 40 & 21 & 18 & 13 & 14 &3\\
       60000 & 41 & 22 & 18 & 14 &14 &3\\
       70000& 41 & 22 & 18 &15 & 15 &3\\
       80000 & 41 & 22  &18 & 15 & 16 &3\\
       \hline
    \end{tabular}
    \caption{Refinement on believed equivalence on the LKA module}
    \label{tab.cuts}
     \vspace{-5mm}
\end{table}

Table~\ref{tab.cuts} provides a trend summary on the number of cuts applied in each category, subject to the number of test cases being encountered. After the first~$1000$ test cases, LV has been refined to~$28$ intervals, while YAR has been refined to~$14$ intervals. The searching sequence for feasible cuts is from the first category to the last. 
From the table, we observe that after the number of explored test cases exceeds~$40000$, the number of cuts increases slowly. The reason is that the test cases and their evaluated values diverse in the beginning, so the categories are frequently updated. However, with the increasing number of explored test cases, most of the new test cases are consistent with existing ones. 

Note that the number of cuts in every category is affected by the ordering how cuts are applied.  As the search always starts from the first category (LV), the number of cuts for LV is larger than those of the others. 
When we change the searching sequence and start from the last category to the first one (i.e., to start the cut from MD), the number of cuts after the first~$1000$ test cases becomes~$1$, $3$, $9$, $12$, $16$ and~$3$, respectively. 
Though the number of cuts varies with different searching sequence, the number of cuts for the last category (MD) is more stable. The reason being that the number of different valuations of MD in the dataset is small.  

The ACC module takes five inputs, including driver-set velocity ($V_{set}$), time gap ($T_{gap}$), velocity of the ego car ($V_{ego}$), relative distance to the lead car ($D_{rel}$), and relative velocity to the lead car ($V_{rel}$) to decide the acceleration of the ego car. The first two inputs are assigned with fixed valuations.
By setting the range of initial velocities and positions for lead and ego cars, we obtain a set of inputs and outputs from the ACC module using model predictive control
lateral velocity (LV). We then construct a DNN for the implementation of the module. The structure of the DNN is similar to that of LKA. The model is trained with the obtained set, and the accuracy reaches 98.0\%. The output of the DNN ranges from~$-3$ to 2, for the physical limitation of the vehicle dynamics. We also divide the range of its output into five classes, indicating the velocity can be strong braking, slow braking, braking, slow accelerating, or strong accelerating. The categories of inputs are gradually refined when inconsistent test cases appear.  

\begin{table}[tp]
    \centering\small
    \begin{tabular}{r|c c  c}\hline
      $\#$ test cases  & $V_{ego}$ & $D_{rel}$ & $V_{rel}$  \\\hline
      0 & 1 & 1 &1 \\
      1000 &221 & 27 & 1\\
      2000 & 230 & 34 & 1\\
      3000 & 261 & 43 & 3 \\
      4000 & 262 & 53 & 3 \\
      5000 & 284 & 55 & 4 \\
      6000 & 285 & 57 & 4 \\
      7000 & 289 & 58 & 5\\
      8000 & 290 & 62 & 5 \\
      9000 & 305 & 67 & 5\\
      10000 & 307 & 70 & 5\\\hline
    \end{tabular}
    \caption{Refinement on believed equivalence on the ACC module}
    \label{tab.acc}
         \vspace{-5mm}
\end{table}

Table~\ref{tab.acc} lists the trend summary on the number of cuts applied in each category of three inputs (except for the first two), subject to the number of encountered test cases. The searching sequence of feasible cuts is from the first to the last. From the table, we observe that the number of cuts for the velocity of ego increases rapidly when the number of explored test cases is less than 1000. However, the number of cuts for the last input increases slowly. The reason is that the velocity of the ego and the relative distance between the ego and lead car may be the main factors to decide the acceleration valuation of the ego car.  

The refined categories show that  the granularity for most of the cuts  is less than one for the three inputs. For input~$V_{rel}$, the cut granularity is larger when the velocity of the lead car is faster than the ego car. We guess that the model is more sensitive when the velocities of the lead and ego cars are similar.   

     %\vspace{-2mm}

\section{Concluding Remarks}\label{sec.concluding.remarks}

     %\vspace{-2mm}

In this paper, we considered the practical problem of using the equivalence class in testing autonomous vehicles. Starting with a set of categorizations, we realized that any combination is only an equivalence class ``candidate" and may be further subject to refinement. We formulated this idea using believed equivalence, established its connection with coverage criterion using combinatorial testing, and discussed analytical solutions for finding a refinement using optimization. As guided by the newly discovered test cases, the refinement offers traceability on why such a believed equivalence is formed. 

Currently, our initial evaluation is conducted using publicly available autonomous driving functions. We plan to integrate this technique in one of our research projects on automated valet parking to refine the process further and bring the concept into standardization bodies.

\vspace{2mm}
\noindent\textbf{(Acknowledgement)} This project has received funding from the European Union’s Horizon 2020 research and innovation programme under grant agreement No 956123.

\bibliographystyle{abbrv}

% that's all folks
\end{document}